\newcommand{\cdf}[1]{{\color{red} #1 }}
\DeclareMathOperator{\Tr}{Tr}
\newtheorem{theorem}{Theorem}
\title{AUTM Flow: Atomic Unrestricted Time Machine \\
for Monotonic Normalizing Flows}
\author{Difeng Cai\thanks{Department of Mathematics, Emory University, USA} \and Yuliang Ji$^{\ast}$ \and Huan He\thanks{Department of Computer Science, Emory University, USA} \and Qiang Ye\thanks{Department of Mathematics, University of Kentucky, USA} \and Yuanzhe Xi$^{\ast}$}
\begin{document}
\maketitle

\begin{abstract}
  Nonlinear monotone transformations are used extensively in normalizing flows to construct invertible triangular mappings from simple distributions to complex ones.
In existing literature, monotonicity is usually enforced by restricting function classes or model parameters and the inverse transformation is often approximated by root-finding algorithms as a closed-form inverse is unavailable.
In this paper, we introduce a new integral-based approach termed "Atomic Unrestricted Time Machine (AUTM)", equipped with unrestricted integrands and easy-to-compute explicit inverse.
AUTM offers a versatile and efficient way to the design of normalizing flows with explicit inverse and unrestricted function classes or parameters.
Theoretically, we present a constructive proof that AUTM is universal: all monotonic normalizing flows can be viewed as limits of AUTM flows.
We provide a concrete example to show how to approximate any given monotonic normalizing flow using AUTM flows with guaranteed convergence.
The result implies that AUTM can be used to transform an existing flow into a new one equipped with explicit inverse and unrestricted parameters. The performance of the new approach is evaluated on high dimensional density estimation, variational inference and image generation.
Experiments demonstrate superior speed and memory efficiency of AUTM.
\end{abstract}

\section{Introduction}

Generative models aim to learn a latent distribution from given samples and then generate new data from the learned distribution. There are several kinds of generative models, including
generative adversarial networks (GANs) \cite{GAN_goodfellow}, variational autoencoders  (VAE) \cite{kingma2014autoencoding}, and normalizing flows \cite{flow15danilo}, etc. 
Unlike GANs and VAE, normalizing flows offer a tractable and efficient way for exact density estimation and sampling. 
Applications of normalizing flows include image generation \cite{GLOW18,flow++2019}, noise modelling \cite{noiseflow2019}, and reinforcement learning \cite{offpolicy2019}, et al.


Two challenges in normalizing flows are the computation of Jacobian determinant and the inverse transformation. 
Different architectures have been proposed to address those issues. 
Neural ODE \cite{NeuralODE18} and Free-form Jacobian of Reversible Dynamics (FFJORD) \cite{FFJORD} pioneered the way of modeling the transformation as a dynamical system.
The inverse can be easily computed by reversing the dynamics in time, but Jacobian determinant is hard to compute and the use of neural network to model the dynamics often leads to high computational cost.
To simplify the Jacobian computation, most flows employ monotone triangular mappings so that the Jacobian is triangular. Two such architectures are autoregressive flows and coupling flows.
Examples of monotone mappings used in those flows include 1) \textbf{Special function classes} such as affine function \cite{nice2014,realNVP17,GLOW18}),
   rational function \cite{ziegler2019},     logistic mixture \cite{flow++2019},
    splines \cite{cubicspline,neuralspline19}; 2) \textbf{Neural networks} \cite{naf2018,bnaf20};
    and 3) \textbf{Integral of positive functions} \cite{sumsquare19,UMNN}.

To ensure monotonicity, methods using ``special function classes" such as splines or sigmoid function $\sigma$ (see Table \ref{tab:CF}) have to impose constraints on model parameters, which often impede the expressive power of the transformation as well as the efficiency of training.
For example, during optimization, updates like  $\theta=\theta-\gamma\nabla_{\theta}\mathcal{L}$ can potentially make the parameter $\theta$ fall out of the prescribed range and modifications are needed to guarantee the monotonicity under the new update.
Integral-based methods rely on the simple fact that the function
    $q(x) = c + \int_0^x g(x) dx$
is always increasing as long as the integrand $g$ is globally \emph{positive}.
For example, $g$ is modeled as positive polynomials in sum-of-squares (SOS) polynomial ﬂow by \cite{sumsquare19}
and as positive neural networks in unconstrained monotonic neural networks (UMNN) by \cite{UMNN}.
Due to the flexibility offered by an integral form, these methods allow \emph{unrestricted} model parameters as compared to other methods.
It was shown in \cite{UMNN} that the method requires fewer parameters than straightforward neural network-based methods in \cite{naf2018,bnaf20} and can scale to high dimensional datasets.
However, unlike Neural ODE and FFJORD, these integral-based monotone mappings do not possess an explicit inverse formula and one has to resort to root-finding algorithms to compute the inverse transformation.
This leads to increased computational cost because in \emph{each} iteration of root-finding, one has to compute an integral of a complicated function.
As discussed in \cite{UMNN}, a judicious choice of quadrature rule is needed.

\paragraph{Contributions}
In this paper, we propose a new integral-based monotone triangular flow with proven universal approximation property for monotonic flows.
The major contributions include the following.
\begin{itemize}
    \item \textbf{Unrestricted model classes and parameters.} The proposed integral-based transformation is strictly increasing with \emph{no} constraint on the integrand except being Lipschitz continuous.
    \item \textbf{Explicit inverse.} The inverse formula is explicitly given and compatible with fast root-finding methods for  numerical inversion.
    \item \textbf{Universality.} The proposed model is \emph{universal} in the sense that \emph{any} monotonic normalizing flow
    is a limit of the proposed flows.
\end{itemize}

\section{Background}
Normalizing flows are a class of generative models that aim to find a bijective mapping $f$ such that $f^{-1}$ ``normalizes" the complex distribution into a tractable base distribution (Gaussian, for example).
Once the ``normalizing" mapping is found, generating new data points boils down to simply sampling from the base distribution and applying the forward transformation $f$ to the samples.
This makes normalizing flows a popular choice in density estimation, variational inference, image generation, etc.

Let $Y\in \mathbb{R}^D$ be a random variable with a possibly complicated probability density function $p_Y(y)$
and $X \in \mathbb{R}^D$ be a random variable with a well-studied probability density function $p_X(x)$. 
Assume that there is an invertible (vector-valued) function $f:\mathbb{R}^D\to\mathbb{R}^D$ that transforms the ``base" variable $X$ to $Y$, i.e. $Y=f(X)$. 
Then according to the change of variables formula, the probability density functions $p_Y$ and $p_X$ satisfy
\begin{equation}
\label{formula_of_flow}
    p_Y(y)=p_X(x)\left| \det J_{f^{-1}}(y) \right|=p_X(x) \left| \det J_{f}(x) \right|^{-1},
\end{equation}
where $J_f(x)$ denotes the Jacobian of $f$ evaluated at $x$.

The success of normalizing flows hinges on many factors, including \textbf{1:} the expressive power of the class of bijective transformations;
\textbf{2:} the efficiency in computing the transformation and its inverse;
\textbf{3:} the efficiency in computing the Jacobian determinant.

A popular architectural design to address those points is to employ an invertible triangular transformation, whose Jacobian is triangular and inversion can be computed in an entrywise fashion.
Two representative triangular normalizing flows are autoregressive flows and coupling flows.
Throughout the paper, we use $x_{1:k}$ to denote the vector $(x_1,\dots, x_k)$.

\subsection{Autoregressive flows}
Autoregressive flows choose $f:\mathbb{R}^D\to \mathbb{R}^D$ to be an autoregressive mapping:
\begin{equation}
\label{eq:AF}
\begin{split}
        f(x;&\theta) =  (q_1(x_1;\theta_1),q_2(x_2;\theta_2(x_1)),\dots, \\
        &q_k(x_k;\theta_k(x_{1:k-1})),
        \dots,q_D(x_D;\theta_D(x_{1:D-1}))),
\end{split}
\end{equation}
where $x=(x_1,\dots,x_D)$ and each $q_k$, termed \emph{transformer}, is a bijection parametrized by the so-called \emph{conditioner} $\theta_k(x_{1:k-1})$.
The Jacobian of $f$ is a lower triangular matrix.
To guarantee the invertibility of $f$, $q_k$ is chosen to be a monotone function of $x_k$.
Since $q_k$ is usually a nonlinear neural network, an analytic inverse is not available and the inverse is computed by root-finding algorithms.

\subsection{Coupling flows}
Coupling flows 
first partition the input vector $x=(x_1,\dots,x_D)$ into two parts $x_{1:d}$ and $x_{d+1:D}$ and then apply the following transformation:
\begin{equation}
\label{eq:CF}
\begin{aligned}
    y_{1:d} &= x_{1:d},\quad
    y_{d+1:D} &= q(x_{d+1:D}; \theta(x_{1:d})),
\end{aligned}
\end{equation}
where the parameter $\theta(x_{1:d})$ is an arbitrary function of $x_{1:d}$ and the scalar \emph{coupling function} $q$ is applied entrywisely, i.e., $q(x_{d+1:D}) = ( q(x_{d+1}),\dots, q(x_{D}) )$.
The transformation in \eqref{eq:CF} from $x$ to $y$ is called a \emph{coupling layer}.
In normalizing flows, multiple coupling layers are composed to obtain a more complex transformation with the role of the two mappings in \eqref{eq:CF} swapped in alternating layers.
The Jacobian of \eqref{eq:CF} is a lower triangular matrix with a 2-by-2 block structure corresponding to the partition of $x$.
In \cite{realNVP17}, the coupling function $q$ is chosen as an affine function:
    $q(z) = z\cdot \exp(\alpha(x_{1:d}))+\beta(x_{1:d})$,
where $\alpha$ and $\beta$ are arbitrary functions,
and the resulting flow is termed \emph{affine coupling flow}.

\begin{table*}[h]
\caption{Comparison of different normalizing flow architectures. \texttt{E.I.} stands for explicit inverse, \texttt{U.P.} for unrestricted parameters, \texttt{U.F.} for unrestricted function representations.}
    \label{tab:CF}
    \centering

    \begin{tabular}{c|cccc}
    \hline
     Method & monotone map & \texttt{E.I} & \texttt{U.P.} & \texttt{U.F.}  \\
     \hline
     Real NVP \cite{realNVP17}   & affine function & yes & yes & no\\
     \hline
     Glow \cite{GLOW18} & affine function & yes & yes & no\\
     \hline \\[-1em]
      Flow++ \cite{flow++2019}  & $\alpha \sigma^{-1} \left( \sum\limits_{i=1}^r c_i \sigma\left( \frac{x-a_i}{b_i} \right) \right)+\beta$ & no & no & no \\[1.1em]
     \hline 
      NSF \cite{neuralspline19} & rational-quadratic spline & yes & no & no \\
     \hline \\[-1em]
      SOS \cite{sumsquare19} & $\int_0^x \sum\limits_{i=1}^L p_i(x)^2 dx + c$ & no & yes & no \\[0.8em]
     \hline \\[-0.8em]
      UMNN \cite{UMNN} &  $\int_0^x f(x) dx + \beta \;\; (f>0)$ & no & yes & no  \\[0.4em]
     \hline \\[-0.85em]
    AUTM (new) & $x+\int_0^1 g(v(t),t)dt$ & yes & yes & yes \\[0.31em]
     \hline
\end{tabular}
\end{table*}

\section{Atomic Unrestricted Time Machine (AUTM) flows}

Lots of efforts have been made in recent years to construct a coupling function or transformer $q(x)$ that is strictly monotone (thus invertible) as well as expressive enough.
As shown in Table \ref{tab:CF}, sophisticated machinery is used to improve the expressive power and meanwhile ensure the monotonicity (invertibility) of $q$,
which usually requires restricting the form of $q$ or the model parameters, e.g. in \cite{ziegler2019,flow++2019,cubicspline,neuralspline19}.
Moreover, since $q$ is a complicated nonlinear function, an analytic format of  $q^{-1}$ is generally not available and thus numerical root-finding algorithms are often used to compute the inverse transformation.
It is natural to ask whether \emph{there exists a family of universal monotone functions with analytic inverses and unrestricted model parameters or representations?}

We propose a new approach to construct a monotone $q(x)$ based on integration with respect to a free \emph{latent} variable.
The introduction of the latent variable enables the use of unconstrained transformations and renders exceptional flexibility for manipulating the transformation and its inverse.
The resulting coupling flows and autoregressive flows can be inverted easily using the inverse formula and the Jacobian is triangular.

We define $q: \mathbb{R}\to \mathbb{R}$ through a latent function $v(t)$ by 
\begin{equation}
\label{eq:AUTM}
    \begin{aligned}
    q:x\to y = v(1), \quad v(t)=x+\int_0^t g(v(t),t)dt.
    \end{aligned}
\end{equation}
where $g(v,t)$ is uniformly Lipschitz continuous in $v$ and continuous in $t$ $(0\leq t \leq 1)$. Equivalently, $v(t)$ satisfies $v'(t)= g(v(t),t)$ and $v(0)=x$. So  the transformation from $x$ to $y$ can be viewed as an evolution of the latent dynamic $v(t)$. 
Note that the integral in (\ref{eq:AUTM}) is with respect to $t$ instead of $x$ and the integrand does \emph{not} have to be positive. 
Moreover, we can easily find the inverse transform as \begin{equation}
    \label{eq:AUTMinv}
    \begin{aligned}
    &q^{-1}: y\to x=v(0),\;
    v(t)=y+\int_1^t g(v(t),t) dt.
\end{aligned}
\end{equation}

An explicit inverse formula brings significant  computational speedups as compared to existing integral-based methods that rely on numerical root-finding algorithms. 
Compared with other coupling functions/transformers, there is no assumption on $g$ other than Lipschitz continuity.
We will show later in Section \ref{sec:theory} that the transformation in \eqref{eq:AUTM} is strictly increasing and is general enough to approximate \emph{any} given continuously increasing map.

\paragraph{AUTM}
We term the mapping $q$ in \eqref{eq:AUTM} an ``Atomic Unrestricted Time Machine (AUTM)".
``\textbf{Atomic}" means that (i) $q$ is always \emph{univariate} and \emph{scalar-valued}; 
(ii) $g(v,t)$ can be as simple as an affine function in $v$ and does \emph{not} have to be a deep neural network to achieve good performance;
(iii) the computation of the transformation as well as Lipschitz constant is \emph{lightweight};
(iv) the model can be easily incorporated into existing normalizing flow architectures.
In fact, we will see later in Section \ref{sec:theory} that \emph{any} monotonic normalizing flow is a limit of AUTM flows.
``\textbf{Unrestricted}" means that there is no constraint on parameters or function forms in the model.
``\textbf{Time Machine}" refers to the fact that the model is automatically invertible and the computation of inverse is essentially a reverse of integral limits.
Thanks to the ``atomic" property, AUTM can be easily incorporated into triangular flow architectures such as coupling flows and autoregressive flows.

\paragraph{AUTM coupling flows}
Given a $D$ dimensional input $x=(x_1,\dots,x_D)$ and $d<D$,
the AUTM coupling layer $f:\mathbb{R}^D\to\mathbb{R}^D$ is defined as follows.
\begin{equation}
\label{eq:AUTM-CF}
\begin{split}
    y_{1:d}=x_{1:d},\quad
    y_{d+1:D}=q(x_{d+1:D};\theta(x_{1:d})),
\end{split}
\end{equation}
where $q$ is the AUTM map defined in \eqref{eq:AUTM}. 
Let $f_*$ denote the AUTM coupling layer with $q$ applied to $x_{1:d}$ instead of $x_{d+1:D}$, i.e.
\begin{equation}
\label{eq:AUTM-CF2}
\begin{split}
    y_{1:d}=q(x_{1:d};\theta(x_{d+1:D})),\quad
    y_{d+1:D}=x_{d+1:D}.
\end{split}
\end{equation}
The AUTM coupling flow is defined by stacking $f$ and $f_*$
in a multi-layer fashion as shown in Figure \ref{fig:autm}(left).
Since the inverse $q^{-1}$ is given in \eqref{eq:AUTM}, the inverse transformation $f^{-1}$  or $f_*^{-1}$ is readily available.

\paragraph{AUTM autoregressive flows}
An autoregressive flow is composed of autoregressive mappings, similar to coupling layers in coupling flows.
The AUTM autoregressive mapping on $\mathbb{R}^D$ is defined as
\begin{equation}
\label{eq:AUTM-AF}
\begin{split}
    f(x;\theta) = &(q_1(x_1;\theta_1),\dots, q_D(x_D;\theta_D(x_{1:D-1}))),
\end{split}
\end{equation}
where each $q_k(x_k;\theta_k(x_{1:k-1}))$ is an AUTM map with unrestricted conditioner $\theta_k(x_{1:k-1})$.
See Figure \ref{fig:autm}(right)  for an illustration.
The inverse mapping $f^{-1}$ can be computed rapidly by first computing $q_1^{-1}$ (which gives $x_1$) and then $q_2^{-1}, q_3^{-1},\dots, q_D^{-1}$, where each $q_k^{-1}$ is explicitly given by \eqref{eq:AUTMinv}.
The Jacobian of $f$ in \eqref{eq:AUTM-AF} is lower triangular.

\begin{figure*}
    \centering
    \includegraphics[scale=0.6]{./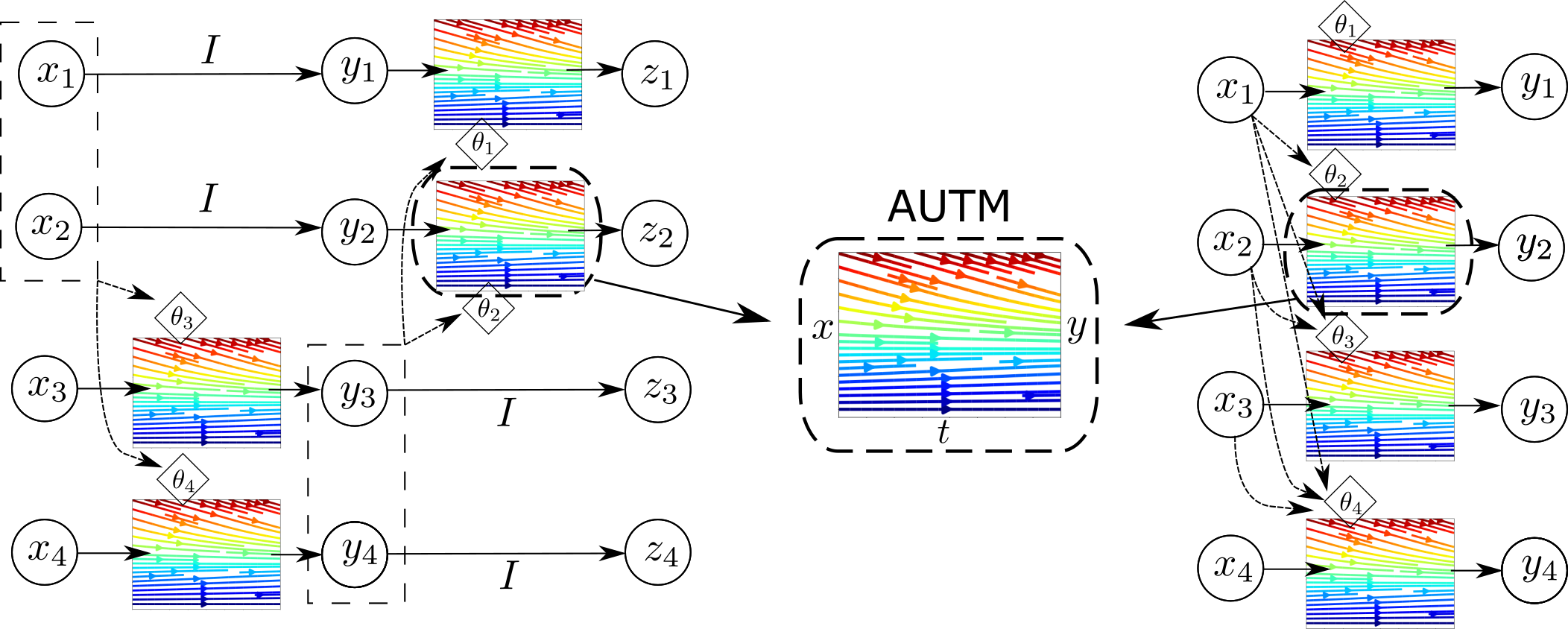}
    \caption{Left to right: AUTM coupling flow (2 layers), AUTM map, AUTM autoregressive flow (1 layer).}
    \label{fig:autm}
\end{figure*}

\paragraph{Jacobian determinant and log-density}
\label{Jacobian_ours}
The Jacobian of AUTM flow is lower triangular.
It will be shown in Theorem \ref{thm:derivative} that the derivative of the mapping $q(x)$ is given by $q'(x) = \exp\left( \int_0^1\frac{\partial g}{\partial v}(v(t),t) dt \right)$.
Thus one can immediately derive the Jacobian determinant of AUTM flow.
If coupling layer is used, the Jacobian determinant is
\begin{equation*}
    \exp\left( \int_0^1 \sum_{k=d+1}^D \frac{\partial g}{\partial v}(v_k(t),t;\theta(x_{1:d}))dt \right)\\
\end{equation*}
If autoregressive layer is used, the Jacobian determinant is
\begin{equation*}
    \exp\left( \int_0^1 \sum_{k=1}^D \frac{\partial g}{\partial v}(v_k(t),t;\theta(x_{1:k-1}))dt \right).
\end{equation*}

From the above formulas and \eqref{formula_of_flow}, the change of log-density of an AUTM flow follows immediately.
If coupling layer is used, then
\begin{equation*}
\log p_Y(y)=\log p_X(x)-\int_0^1 \sum_{k=d+1}^D \frac{\partial g}{\partial v}(v_k(t),t;\theta(x_{1:d})) dt.
\end{equation*}
If autoregressive layer is used, then
\begin{equation*}
\log p_Y(y)=\log p_X(x)-\int_0^1 \sum_{k=1}^D \frac{\partial g}{\partial v}(v_k(t),t;\theta(x_{1:k-1})) dt.
\end{equation*}

\section{Monotonicity and universality of AUTM flows}
\label{sec:theory}
In this section, we present several key results on AUTM flows, including monotonicity and universality. The proofs can be found in Appendix \ref{app:proof}.
The derivative of $q(x)$ in \eqref{eq:AUTM} is explicitly available.
\begin{theorem}[Derivative]
\label{thm:derivative}
    Let $q(x)$ be defined in \eqref{eq:AUTM}. Then 
        $ q'(x) = \exp\left( \int_0^1\frac{\partial g}{\partial v}(v(t),t) dt \right)$. 
 \end{theorem}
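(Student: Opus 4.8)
The plan is to differentiate the defining ODE with respect to the initial condition $x$ and solve the resulting linear variational equation. Write $v(t)=v(t;x)$ to emphasize the dependence on $x$, so that $q(x)=v(1;x)$ and $q'(x)=\partial v(1;x)/\partial x$. First I would invoke the classical theorem on smooth dependence of ODE solutions on initial data: since $g(v,t)$ is uniformly Lipschitz in $v$ and continuous in $t$ on $[0,1]$, and (as the statement presupposes) $\partial g/\partial v$ exists and is continuous, the solution map $x\mapsto v(t;x)$ is $C^1$, and the partial derivative $w(t):=\partial v(t;x)/\partial x$ is well-defined and continuous in $t$.

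Next I would derive the variational equation. Starting from the integral form $v(t;x)=x+\int_0^t g(v(s;x),s)\,ds$ and differentiating under the integral sign (justified by the continuity and boundedness of $\partial g/\partial v$ along the compact trajectory $\{(v(s;x),s):0\le s\le 1\}$), we obtain $w(t)=1+\int_0^t \frac{\partial g}{\partial v}(v(s;x),s)\,w(s)\,ds$. Equivalently, $w$ solves the linear scalar ODE $w'(t)=a(t)\,w(t)$ with $w(0)=1$, where $a(t):=\frac{\partial g}{\partial v}(v(t;x),t)$ is continuous on $[0,1]$.

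Finally I would solve this linear ODE explicitly: its unique solution is $w(t)=\exp\left(\int_0^t a(s)\,ds\right)$, verified directly by differentiation or obtained via the integrating-factor method. Setting $t=1$ gives $q'(x)=w(1)=\exp\left(\int_0^1 \frac{\partial g}{\partial v}(v(t;x),t)\,dt\right)$, as claimed. As an immediate corollary, $q'(x)>0$ for every $x$, so $q$ is strictly increasing.

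The main obstacle is the rigorous justification of the first step — differentiability of the flow map $x\mapsto v(t;x)$ under the stated hypotheses. If one wishes to avoid quoting the smooth-dependence theorem, the cleanest self-contained route is to form the difference quotient $[v(t;x+h)-v(t;x)]/h$, use Gr\"onwall's inequality to show it stays bounded and is Cauchy as $h\to 0$, and identify its limit as the solution of the variational integral equation above; this needs only the mild regularity that already makes $\partial g/\partial v$ meaningful. Everything after the first step is routine.
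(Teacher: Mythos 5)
Your proof is correct and follows essentially the same route as the paper: differentiate the integral equation with respect to the initial value $x$, obtain the linear variational equation $w'(t)=\frac{\partial g}{\partial v}(v(t;x),t)\,w(t)$ with $w(0)=1$, and solve it to get $q'(x)=w(1)=\exp\bigl(\int_0^1 \frac{\partial g}{\partial v}(v(t;x),t)\,dt\bigr)$. The only difference is that you explicitly address the differentiability of the flow map $x\mapsto v(t;x)$ (via the smooth-dependence theorem or a Gr\"onwall argument), a point the paper's proof takes for granted when it differentiates under the integral sign.
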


\begin{theorem}[Monotonicity]
\label{thm:mono}
    The mapping $q(x)$ defined in \eqref{eq:AUTM} is invertible and  strictly increasing.
\end{theorem}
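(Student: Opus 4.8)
The plan is to read off strict monotonicity from the derivative formula of Theorem~\ref{thm:derivative}, and to obtain invertibility from the classical existence--uniqueness theory for ordinary differential equations, which simultaneously identifies the inverse with the backward flow \eqref{eq:AUTMinv}.

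First I would invoke Theorem~\ref{thm:derivative}: for every $x\in\mathbb{R}$ we have $q'(x)=\exp\!\left(\int_0^1 \tfrac{\partial g}{\partial v}(v(t),t)\,dt\right)$, and since $\tfrac{\partial g}{\partial v}$ is bounded (in absolute value) by the uniform Lipschitz constant of $g$ in $v$, the exponent is finite and hence $q'(x)>0$ for all $x$. A differentiable function on $\mathbb{R}$ whose derivative is everywhere positive is strictly increasing by the mean value theorem; thus $q$ is strictly increasing, and in particular injective.

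It then remains to show that $q$ is onto $\mathbb{R}$, for which I would use the backward dynamics. Fix $y\in\mathbb{R}$ and consider $v(t)=y+\int_1^t g(v(s),s)\,ds$, i.e. the terminal-value problem $v'=g(v,t)$, $v(1)=y$ on $[0,1]$. Because $g$ is uniformly Lipschitz in $v$ and continuous in $t$, Picard--Lindel\"of yields a unique local solution; the linear growth bound $|g(v,t)|\le|g(0,t)|+L|v|$ (with $L$ the Lipschitz constant) together with Gr\"onwall's inequality rules out finite-time blow-up, so the solution exists on all of $[0,1]$. Setting $x:=v(0)$, uniqueness forces this $v$ to coincide with the forward solution of \eqref{eq:AUTM} starting from $v(0)=x$, whence $q(x)=v(1)=y$. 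Therefore $q$ is surjective, hence bijective, and the construction exhibits $q^{-1}$ as precisely the map in \eqref{eq:AUTMinv}.

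The only genuinely technical point is the global solvability of the ODE on the whole interval $[0,1]$, both forward and backward, i.e. excluding finite-time blow-up; this is exactly where the \emph{uniform} (not merely local) Lipschitz hypothesis on $g$ enters, through the linear growth bound and Gr\"onwall's lemma. If ``invertible'' were only meant in the sense of admitting an inverse on the range $q(\mathbb{R})$, then strict monotonicity and continuity of $q$ would already suffice; I would nonetheless retain the backward-flow argument, since it is what yields the explicit inverse formula \eqref{eq:AUTMinv}.
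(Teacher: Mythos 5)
Your proof is correct, but it follows a genuinely different route from the one the paper writes out. The paper does mention in one sentence that "the simplest way" is to read off $q'(x)>0$ from Theorem~\ref{thm:derivative} — which is exactly your first step — but it then deliberately gives a different, self-contained argument: a proof by contradiction in which two trajectories $v_x$, $v_{x'}$ with $x<x'$ that either end at the same value or (via the intermediate value theorem) cross at some $\tau\in(0,1)$ would furnish two distinct solutions of the time-reversed ODE with the same initial value, contradicting uniqueness under the Lipschitz hypothesis. That argument needs only Lipschitz continuity of $g$ in $v$, whereas your route inherits from Theorem~\ref{thm:derivative} the implicit requirement that $\partial g/\partial v$ exist (and that $v$ depend differentiably on $x$), which is slightly stronger than the stated hypotheses — not a gap, since Theorem~\ref{thm:derivative} is available in the paper, but worth being aware of. On the other hand, your argument proves strictly more than the paper's: the paper dismisses invertibility with "monotonicity implies invertibility" (i.e.\ invertibility onto the range), while your backward terminal-value problem, with Picard--Lindel\"of plus the linear growth bound and Gr\"onwall to exclude blow-up, shows $q$ is onto all of $\mathbb{R}$ and simultaneously identifies $q^{-1}$ with the explicit formula \eqref{eq:AUTMinv} — a statement the paper asserts elsewhere but never proves in this theorem. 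So the trade-off is: the paper's proof is more elementary in its hypotheses; yours delivers a stronger conclusion (bijectivity of $\mathbb{R}\to\mathbb{R}$ together with the inverse flow representation).
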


The expressive power of AUTM is summarized in the following theorems, which state that one can approximate \emph{any} monotone continuous transformation with a family of AUTM transformations. 

\begin{theorem}[AUTM as a universal monotone mapping]
\label{thm:dense}
Let $\mathcal{C}$ be the space of continuous functions on $\mathbb{R}$ with compact-open topology and let $\mathcal{M}\subset \mathcal{C}$ be the cone of (strictly) increasing continuous functions.
Then the set of AUTM bijections 
$$
	\mathcal{Q} = \{ q(x) \text{ in } \eqref{eq:AUTM}: v(0)=x\in\mathbb{R} \}
$$
is dense in $\mathcal{M}$.
\end{theorem}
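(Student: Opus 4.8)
The plan is to show that any strictly increasing continuous $h\in\mathcal{M}$ can be approximated uniformly on compact sets by AUTM maps of the form \eqref{eq:AUTM}. The key observation is that an AUTM map is the time-$1$ flow of the ODE $v'(t)=g(v(t),t)$, and if we pick $g$ to be (close to) the \emph{infinitesimal generator of the straight-line homotopy} between the identity and $h$, the time-$1$ flow will be (close to) $h$ itself. Concretely, given $h$, consider the interpolation $H(t,x) = (1-t)x + t\,h(x)$ for $t\in[0,1]$, so $H(0,\cdot)=\mathrm{id}$ and $H(1,\cdot)=h$. Since $h$ is strictly increasing, $H(t,\cdot)$ is strictly increasing for every $t$, hence invertible; write its inverse in the $x$-slot as $H^{-1}(t,\cdot)$. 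Define the candidate generator $g(v,t) := \partial_t H\big(t, H^{-1}(t,v)\big) = \big(h-\mathrm{id}\big)\!\big(H^{-1}(t,v)\big)$. A direct check shows that $v(t):=H(t,x)$ solves $v'(t)=g(v(t),t)$, $v(0)=x$, so the exact time-$1$ map of this $g$ is precisely $h$. Thus $h$ itself is "morally" an AUTM map; what remains is to handle the regularity gap, since the definition \eqref{eq:AUTM} requires $g$ to be uniformly Lipschitz in $v$ and continuous in $t$, whereas a general $h\in\mathcal{M}$ need be neither Lipschitz nor bounded away from having flat/steep regions.

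The steps, in order, would be: (1) Fix a compact set $K=[-R,R]$ and $\varepsilon>0$; by enlarging slightly we may work on a slightly larger interval so that all flows considered stay in a fixed compact set. (2) Approximate $h$ on this interval by a smooth strictly increasing function $\tilde h$ with $\tilde h'$ bounded above and below by positive constants — e.g. mollify $h$ and add a tiny linear term $\eta x$ to guarantee a uniform lower bound on the derivative; standard estimates give $\|\tilde h - h\|_{\infty,K}<\varepsilon/2$. (3) For this $\tilde h$, build the generator $g(v,t)=(\tilde h-\mathrm{id})(H^{-1}(t,v))$ as above; because $\tilde h$ is smooth with derivative in $[c,C]$ with $c>0$, the slice maps $H(t,\cdot)$ are bi-Lipschitz uniformly in $t$, so $H^{-1}(t,\cdot)$ is Lipschitz in $v$ uniformly in $t$ and jointly continuous in $(t,v)$; hence $g$ satisfies exactly the hypotheses required in \eqref{eq:AUTM}. (4) Conclude that the AUTM map with this $g$ equals $\tilde h$ exactly on $K$ (by uniqueness of ODE solutions, using Theorem~\ref{thm:mono}'s setup), and therefore approximates $h$ within $\varepsilon$. (5) Finally, recall that in the compact-open topology on $\mathcal{C}(\mathbb{R})$ a neighborhood basis of $h$ is given by sets of functions $\varepsilon$-close to $h$ on compacta; since $K,\varepsilon$ were arbitrary, $\mathcal{Q}$ is dense in $\mathcal{M}$. (A minor point: one should note $h$ is only required to be defined on $\mathbb{R}$, so the generator $g$ must be defined for all $v\in\mathbb{R}$; extending $\tilde h$ to be affine outside $[-R,R]$ keeps it globally bi-Lipschitz and makes $g$ globally well-defined.)

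The main obstacle is step (3): verifying that the constructed $g$ genuinely meets the Lipschitz-in-$v$, continuous-in-$t$ requirement \emph{globally}, and that the resulting flow does not escape the compact region where we control things. This hinges on the uniform two-sided derivative bound $\tilde h'\in[c,C]$: it is what makes each $H(t,\cdot)$ a bi-Lipschitz bijection of $\mathbb{R}$ with constants independent of $t$, which in turn gives the uniform Lipschitz bound on $H^{-1}(t,\cdot)$ and hence on $g$. Joint continuity of $g$ in $(t,v)$ follows from joint continuity of $H^{-1}$, which is a standard consequence of the implicit function theorem / monotone-dependence argument once the two-sided bound is in place. Everything else — the mollification estimate in step (2), the ODE uniqueness in step (4), and the topology bookkeeping in step (5) — is routine. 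It is worth remarking that this construction is exactly the "concrete example" promised in the abstract: it exhibits a one-parameter family (indexed by the mollification scale and the linear perturbation $\eta$) of AUTM flows converging to the target $h$.
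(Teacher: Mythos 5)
Your proposal is correct, but it takes a genuinely different route from the paper. The paper first reduces to increasing \emph{Lipschitz} targets $\phi$ and then builds an explicit one-parameter family $q_s$ with integrand $g_s(v,t)=\phi(v(te^{-1/s}))-v(te^{-1/s})$; after the time rescaling $\tau=te^{-1/s}$ this is the fast relaxation dynamics $\frac{dv}{d\tau}=e^{1/s}[\phi(v)-v]$, and the argument proceeds by a sign/equilibrium analysis of this autonomous ODE, uniform boundedness of $v_s$, and a direct estimate giving $|q_s(x)-\phi(x)|\leq (L+1)\frac{M}{2}e^{-1/s}$, i.e.\ uniform convergence on compacta with an explicit rate $O(e^{-1/s})$ (this quantitative family is then reused in Theorem \ref{thm:convergence} and displayed as \eqref{eq:AUTMqs}). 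You instead observe that the straight-line homotopy $H(t,x)=(1-t)x+t\,\tilde h(x)$ yields a generator $g(v,t)=(\tilde h-\mathrm{id})(H^{-1}(t,v))$ whose time-$1$ flow is \emph{exactly} $\tilde h$, so every globally bi-Lipschitz strictly increasing $\tilde h$ lies in $\mathcal{Q}$ on the nose, and density follows from the routine regularization step (mollify, add $\eta x$, extend affinely) plus the compact-open topology bookkeeping; your two-sided derivative bound $\tilde h'\in[c,C]$ is indeed what makes $H(t,\cdot)$ uniformly bi-Lipschitz and surjective, hence $g$ uniformly Lipschitz in $v$ and continuous in $t$ as \eqref{eq:AUTM} requires (note the surjectivity of $H(1,\cdot)$ would fail for a general bounded $h$, which is exactly why the argument must run through $\tilde h$ rather than $h$ itself, as you do). What each approach buys: yours gives exact representability of regular targets with an integrand that is literally a function of $(v,t)$, and needs only soft ODE uniqueness; the paper's construction needs no lower bound on $\phi'$ (only the Lipschitz upper bound), keeps the integrand depending on $v$ alone up to time rescaling (the point of Remark 1), and delivers a concrete universal family with an explicit exponential convergence rate, which your qualitative mollification step does not provide.
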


\begin{theorem}[AUTM as a universal flow]
\label{thm:convergence}
For any coupling or autoregressive flow
$F=F_1\circ F_2\circ\cdots\circ F_p$ from $\mathbb{R}^D$ to $\mathbb{R}^D$,
where each $F_k$ is a triangular monotone transformation,
there exists a family of AUTM flows
$\{ T_s \}_{s>0}=\{ T_{s,1}\circ T_{s,2}\circ \cdots\circ T_{s,p} \}_{s>0}$
such that $T_s$ converges to $F$ pointwisely and compactly as $s\to 0$.
In fact, there exists a family such that the convergence rate is $O(e^{-\frac{1}{s}})$ as $s\to 0$.
\end{theorem}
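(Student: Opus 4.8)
## Proof Proposal for Theorem \ref{thm:convergence}

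The plan is to reduce the multivariate flow statement to a statement about each scalar transformer, then invoke the scalar density result from Theorem \ref{thm:dense} in a quantitative form. Since $F = F_1 \circ F_2 \circ \cdots \circ F_p$ and each AUTM flow $T_s = T_{s,1} \circ \cdots \circ T_{s,p}$ is also a composition, and since composition of maps preserves pointwise/compact convergence (uniform convergence on compacta is stable under composition when the limit is continuous, which holds here because each $F_k$ is continuous), it suffices to produce, for each layer $F_k$, a family $T_{s,k}$ of AUTM coupling/autoregressive layers converging to $F_k$ compactly as $s \to 0$ with rate $O(e^{-1/s})$. A triangular monotone layer $F_k$ is built entrywise from scalar strictly increasing continuous transformers $q_{k,j}(\,\cdot\,;\theta)$ (with conditioner $\theta$ depending on the appropriate preceding coordinates), so the whole problem collapses to the \textbf{scalar} claim: given a strictly increasing continuous $q \in \mathcal{M}$, construct a family $\{q_s\}_{s>0}$ of AUTM maps of the form \eqref{eq:AUTM} with $q_s \to q$ uniformly on compacta and $\sup_{x \in K}|q_s(x) - q(x)| = O(e^{-1/s})$.

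For the scalar construction I would proceed as follows. First reduce to the smooth case: approximate $q$ by a $C^1$ strictly increasing function (mollification preserves monotonicity and gives uniform approximation on compacta), absorbing this first error. Then, for a smooth strictly increasing target $q$, I want to choose the integrand $g(v,t)$ so that the flow $v(t)$ with $v(0) = x$ satisfies $v(1) \approx q(x)$. The natural idea is a linear homotopy in the latent time: take the straight-line path $v(t) = (1-t)x + t\,q(x)$, which connects $x$ to $q(x)$; differentiating gives $v'(t) = q(x) - x$, and one must express the right-hand side as a function $g(v(t),t)$ of the current state and time alone. On the graph $\{(v(t),t)\}$ this is consistent precisely because $t \mapsto v(t)$ is invertible (as $q(x) > x$... not necessarily, but $v(t)$ is still monotone in $t$ whenever $q(x) \ne x$, and the degenerate fibers are handled by continuity), so one can solve for $x$ in terms of $(v,t)$ and set $g(v,t) = q(x(v,t)) - x(v,t)$. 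The subtlety is ensuring $g$ is globally Lipschitz in $v$ uniformly in $t$; here one uses that $q$ is $C^1$ with $q' > 0$ on the relevant compact set and modifies $g$ outside a large ball so it stays Lipschitz on all of $\mathbb{R}$ — this modification only affects $x$ outside $K$, so it does not disturb convergence on $K$.

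To obtain the \emph{exponential} rate $O(e^{-1/s})$, rather than exact matching I would introduce the parameter $s$ through a family of integrands $g_s$ that approximate the ideal $g$ with error controlled by $e^{-1/s}$: e.g. take $g_s = g_{n(s)}$ where $g_n$ comes from approximating $q$ (or the path) by an increasingly fine construction and then set $s$ so that $n(s) \sim 1/s$ and the approximation error of an $n$-step or degree-$n$ object decays geometrically; alternatively, scale an analytic/entire approximant of $q$ so that its truncation error on $K$ is $e^{-cn}$ and reparametrize $s = c/n$. The Grönwall inequality then transfers the $e^{-1/s}$ bound on $\|g_s - g\|$ to an $e^{-1/s}$ bound on $\|v_s(1) - v(1)\| = \|q_s(x) - q(x)\|$ uniformly over $x \in K$, since Grönwall gives $\sup_K |q_s(x) - q(x)| \le e^{L}\sup_{K',[0,1]}|g_s - g|$ with $L$ the Lipschitz constant. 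The main obstacle I anticipate is \textbf{realizing $g$ as a legitimate AUTM integrand} — i.e., globally Lipschitz in $v$, continuous in $t$, and well-defined even where the straight-line path is degenerate or where $x$ leaves the compact set — without losing the clean exponential rate; packaging the $s$-dependence so the rate is genuinely $O(e^{-1/s})$ (and not merely $o(1)$) will require a careful choice of the approximating family and is where the real work lies. Finally, I would reassemble: apply the scalar result with uniform constants to each entry of each layer $F_k$ (using that conditioners range over a compact set when inputs lie in a compact set), conclude $T_{s,k} \to F_k$ compactly at rate $O(e^{-1/s})$, and chain the layers, noting the composition of $p$ maps each converging at rate $O(e^{-1/s})$ still converges at rate $O(e^{-1/s})$ on compacta by the Lipschitz-continuity of the (fixed) limit layers.
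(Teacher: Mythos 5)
Your outer reduction is exactly the route the paper takes: its proof of Theorem \ref{thm:convergence} is a short corollary of the scalar construction in the proof of Theorem \ref{thm:dense}, applied entrywise (with the conditioner values ranging over a compact set) to each layer $F_k$ and then pushed through the composition. The divergence, and the gap, is in how you obtain the scalar approximation with rate $O(e^{-1/s})$. The paper never needs to realize an ``ideal'' integrand or invoke Gr\"onwall: it reduces to Lipschitz $\phi$ (dense in $\mathcal{M}$), takes the family \eqref{eq:AUTMqs} with $g_s(v,t)=\phi(v(te^{-1/s}))-v(te^{-1/s})$, shows $v_s$ is uniformly bounded via the autonomous form $dv/d\tau=e^{1/s}[\phi(v)-v]$, and then the bound $|q_s(x)-\phi(x)|\le (L+1)\int_0^1|v_s(te^{-1/s})-x|\,dt\le (L+1)\tfrac{M}{2}e^{-1/s}$ falls out directly, because the integrand only samples the flow at times of size at most $e^{-1/s}$, where $v_s$ has barely moved from $x$.

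Your scalar construction has two concrete problems. First, the stated source of the exponential rate is false: a degree-$n$ (or $n$-term, or truncated-series) approximant of a general increasing continuous or even $C^1$ function does not have error $e^{-cn}$; geometric rates of that type characterize analytic targets, so setting $n(s)\sim 1/s$ does not yield $O(e^{-1/s})$. Second, the Gr\"onwall transfer $\sup_K|q_s-q|\le e^{L}\sup|g_s-g|$ presupposes a Lipschitz constant $L$ for the ideal homotopy integrand that is uniform in $s$, and this is precisely what fails for a general monotone target: $g(v,t)=q(x(v,t))-x(v,t)$ requires inverting $x\mapsto(1-t)x+tq(x)$, whose inverse loses Lipschitz control as $t\to 1$ wherever $q'$ degenerates; regularizing $q$ (mollification plus adding $\delta x$) restores Lipschitzness but with constant of order $1/\delta$, so the Gr\"onwall factor $e^{L}$ blows up as $\delta\to 0$ and can destroy both the rate and convergence. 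The workable repair inside your framework is to drop Gr\"onwall altogether: make the homotopy exact for a regularized target $\phi_s$ (the flow from $x\in K$ stays on the segment between $x$ and $\phi_s(x)$, so the truncation outside a large ball is harmless), so the only error is $\|\phi_s-\phi\|_{K}$, and then choose the regularization parameter as a function of $s$ so this is at most $e^{-1/s}$ (legitimate, since $s$ is a free index) --- or simply use the paper's family \eqref{eq:AUTMqs}. Finally, your chaining step asserts Lipschitz continuity of the limit layers $F_k$, which the theorem does not hypothesize; without it the composed rate degrades to a modulus-of-continuity bound $\omega_{F}(e^{-1/s})$ (the paper is equally terse on this point, but you should not state it as if it were given).
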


Theorem \ref{thm:dense} and Theorem \ref{thm:convergence} imply that all coupling flows and autoregressive flows can be approximated arbitrarily well by AUTM flows.
In the following, we present an explicit construction of such a family of AUTM flows that converge to an arbitrarily given monotonic normalizing flow.
The convergence result provides a link between the proposed AUTM flows and existing monotonic normalizing flows and illustrates the representation power of AUTM.
Notice that every AUTM flow has explicit inverse, so the universality result in this section shows that we can approximate any \emph{monotonic} flow by a flow with \emph{explicit} inverse.

\paragraph{Universal AUTM flows.}
Let $\phi(x)$ be an arbitrary increasing continuous function on $\mathbb{R}$.
Define a family of AUTM bijections parametrized by $s>0$ as follows:
\begin{equation}
\label{eq:AUTMqs}
    q_s(x) = x+\int_0^1 \phi(v_s(te^{-\frac{1}{s}})) - v_s(te^{-\frac{1}{s}}) dt,
\end{equation}
where $v_s(t) = x+\int_0^t \phi(v_s(ze^{-\frac{1}{s}})) - v_s(ze^{-\frac{1}{s}}) dz$.
Then it can be shown that (see Appendix \ref{app:proof}  - Proof of Theorem \ref{thm:dense}):
$q_s|_K$ converges to $\phi|_K$ uniformly on any compact set  $K\subset\mathbb{R}$ as $s\to 0$
and the convergence rate is $O(e^{-\frac{1}{s}})$.
Based on $q_s$, one can construct a family of AUTM coupling flows or autoregressive flows that converge to the given flow based on $\phi$.

In fact, the family of AUTM flows in \eqref{eq:AUTMqs} is just one particular family of universal AUTM flows with more general integrands.
This is formalized in the theorem below regarding universal AUTM flows that generalize \eqref{eq:AUTMqs}. The proof is given in Appendix \ref{app:proof}.

\begin{theorem}
\label{thm:family}
    For $s>0$,
    let $\kappa_s \in C([0,1])$ be a positive function such that 
    \begin{equation}
\label{eq:kernelConditions}
 \int_0^1 \kappa_s(t) dt =1 \text{ and } \int_0^{e^{-\frac{1}{s}}} \kappa_s(t) dt\to 0 \; \text{as}\; s\to 0.
\end{equation}
    Given any increasing continuous function $\phi(x)$,
    we define $q_s$ as follows
        $q_s(x) = x+\int_0^1 g_s(v_s,t) dt$,
    where $g_s(v,t)=\kappa_s(t)[\phi(v(te^{-\frac{1}{s}}))-v(te^{-\frac{1}{s}})]$ and $v_s(t) = x+\int_0^t g_s(v_s,z) dz.$
    Then as $s\to 0$, $q_s|_K$ converges to $\phi|_K$ uniformly for any compact set  $K\subset\mathbb{R}$.
\end{theorem}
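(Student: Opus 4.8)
The plan is to show that the latent path $v_s$ can barely move over the minuscule window $[0,e^{-1/s}]$, and that the integrand defining $q_s$ only ever samples $v_s$ on that window — since for $t\in[0,1]$ the delayed argument $te^{-1/s}$ lies in $[0,e^{-1/s}]$ — so that, using $\int_0^1\kappa_s=1$,
\[
 q_s(x)=x+\int_0^1\kappa_s(t)\bigl(\phi(x)-x\bigr)dt + o(1)=\phi(x)+o(1)
\]
uniformly on compacts. First I would fix a compact $K\subset\mathbb{R}$, enlarge it to the compact set $K^{(1)}=\{u:\operatorname{dist}(u,K)\le 1\}$, and record $C_0=\max_{u\in K^{(1)}}|\phi(u)-u|$, the modulus of continuity $\omega$ of $\phi$ on $K^{(1)}$, and $\epsilon_s=\int_0^{e^{-1/s}}\kappa_s(t)\,dt$, which tends to $0$ by \eqref{eq:kernelConditions}. (That $v_s$, hence $q_s$, is well defined on $[0,1]$ for each $x$ follows from a fixed-point argument: the solution operator $v\mapsto x+\int_0^{\cdot}g_s(v,z)\,dz$ sends a suitable ball of $C([0,1])$ into a compact subset of itself, so Schauder applies; when $\phi$ is Lipschitz the delay $te^{-1/s}$ contracts geometrically under iteration and uniqueness follows as well. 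I will just take $v_s$ as given.)

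The heart of the argument is an a priori estimate $\delta_s:=\sup_{x\in K}\sup_{0\le t\le e^{-1/s}}|v_s(t)-x|\to 0$. Writing $m(\tau)=\sup_{0\le t\le\tau}|v_s(t)-x|$ for $\tau\le e^{-1/s}$, and using $ze^{-1/s}\le z\le\tau$ inside the defining integral together with the crude pointwise bound $|\phi(u)-u|\le\omega(|u-x|)+|u-x|+C_0$ (valid once $u\in K^{(1)}$), I would pull the $z$-independent constant past $\int_0^t\kappa_s\le\epsilon_s$ to obtain the self-referential inequality $m(\tau)\le\epsilon_s\bigl[\omega(m(\tau))+m(\tau)+C_0\bigr]$. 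A short continuity/bootstrap step closes this: the provisional bound $m\le 1$ holds near $\tau=0$, so $\omega(m(\tau))\le\omega(1)$ there, whence $m(\tau)\le 2\epsilon_s\bigl(C_0+\omega(1)\bigr)$ once $\epsilon_s\le\tfrac12$; since this is $<1$ for small $s$, the provisional bound is never saturated and the estimate propagates over all of $[0,e^{-1/s}]$, giving $\delta_s\le 2\epsilon_s(C_0+\omega(1))\to 0$. (For $\kappa_s\equiv 1$ one has $\epsilon_s=e^{-1/s}$, recovering the $O(e^{-1/s})$ rate quoted after \eqref{eq:AUTMqs}.)

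To finish, note $te^{-1/s}\le e^{-1/s}$ for $t\in[0,1]$, so $|v_s(te^{-1/s})-x|\le\delta_s$; using $\int_0^1\kappa_s=1$ to subtract off $\phi(x)-x=\int_0^1\kappa_s(t)(\phi(x)-x)\,dt$ gives
\[
 q_s(x)-\phi(x)=\int_0^1\kappa_s(t)\Bigl(\bigl[\phi(v_s(te^{-1/s}))-v_s(te^{-1/s})\bigr]-\bigl[\phi(x)-x\bigr]\Bigr)dt,
\]
hence $|q_s(x)-\phi(x)|\le\omega(\delta_s)+\delta_s\to 0$ uniformly in $x\in K$. The main obstacle is the a priori bound: the equation for $v_s$ is a functional (delay) equation rather than a pointwise ODE of the form \eqref{eq:AUTM}, so Theorem~\ref{thm:mono} and off-the-shelf Grönwall estimates do not apply, and one must exploit that the delayed argument never leaves $[0,e^{-1/s}]$ and close the bound on that short window through the self-referential $m(\tau)$ inequality — after which the hypothesis $\int_0^{e^{-1/s}}\kappa_s\to 0$ does all the work. (Converting the delay to an ODE via the substitution $u=ze^{-1/s}$ is a dead end: it trades the short interval for a factor $e^{1/s}$ and destroys exactly the cancellation that keeps $\delta_s$ small.)
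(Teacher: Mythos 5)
Your proof is correct, but it takes a genuinely different route from the paper's. The paper proves Theorem~\ref{thm:family} by re-running the proof of Theorem~\ref{thm:dense} with the kernel inserted: reduce to Lipschitz $\phi$, rescale time ($\tau=te^{-1/s}$) to obtain the autonomous equation \eqref{eq:dvdtau}, classify solutions into constant/increasing/decreasing cases, prove uniform boundedness of $v_s$ on all of $[0,1]$ by separation of variables through the antiderivative $G$ of $1/(\phi(v)-v)$, and only then bound $|v_s(te^{-1/s})-x|\le Mte^{-1/s}$ to conclude; its proof of Theorem~\ref{thm:family} just records where positivity, unit mass, and the smallness condition in \eqref{eq:kernelConditions} enter that argument. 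You instead exploit that $q_s$ only ever samples $v_s$ on the window $[0,e^{-1/s}]$, close the self-referential bound $m(\tau)\le\epsilon_s\bigl[\omega(m(\tau))+m(\tau)+C_0\bigr]$ there by a continuity/bootstrap step to get $\delta_s\le 2\epsilon_s(C_0+\omega(1))$, and then use $\int_0^1\kappa_s=1$ to obtain $|q_s(x)-\phi(x)|\le\omega(\delta_s)+\delta_s$ uniformly on $K$. This buys several things: no global-in-$t$ boundedness of $v_s$ is needed, no reduction to Lipschitz $\phi$ (working with the modulus of continuity handles merely continuous increasing $\phi$ directly, which actually fits the statement of Theorem~\ref{thm:family} better than the paper's density reduction), the two kernel hypotheses are used exactly once each, and you recover the $O(e^{-1/s})$ rate for $\kappa_s\equiv 1$. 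Two small remarks: your parenthetical that converting the delayed argument to an ODE is a ``dead end'' overstates the case --- that is precisely the paper's route, made to work not by Gr\"onwall but by the monotone-case classification and the antiderivative argument (although the claimed equivalence with \eqref{eq:dvdtau} itself deserves care, since the equation for $v_s$ is of pantograph type and the literal rescaling produces $v_s(\tau e^{1/s})$ rather than $v_s(\tau)$ on the left-hand side); and, like the paper, you take existence of $v_s$ as given, so your bracketed Schauder sketch is not a gap relative to the paper's own treatment.
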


\textit{Remark 1.}
The proof of Theorem \ref{thm:dense} in Appendix \ref{app:proof}
indicates that it may be sufficient to choose $g(v,t)$ as a function that is explicit in $v$ only.
In fact, it is shown in  \eqref{eq:dvdtau} that, after a scaling of the time variable, the equation for the approximant $v_s$ is autonomous.
Thus we expect good approximation power if $g(v,t)=\frac{dv}{dt}$ is explicit in $v$ only.

\section{Related work}

\paragraph{Integral-based methods: SOS and UMNN.}
Existing integral-based methods like \cite{sumsquare19,UMNN} require the integrand to be positive and the inverse transformation is \emph{not} analytically available. 
When computing the inverse transformation, AUTM only requires evaluating \emph{one} integral while above methods require evaluating $k$ different integrals for $k$ iterations in the root-finding algorithm.
Moreover, \cite{UMNN} models the integrand as a positive neural network while AUTM allows using general function classes with better computational efficiency than neural networks.

\paragraph{Neural ODE and FFJORD.}
Neural ODE \cite{NeuralODE18} and Free-form Jacobian of Reversible Dynamics (FFJORD) \cite{FFJORD} use a dynamical system 
to model the transformation, in which a \emph{multivariate} neural network is used to model the \emph{vector-valued} dynamics.
The use of the integral representation in AUTM to enable the analytic inverse transformation is inspired by Neural ODE and FFJORD.
AUTM differs from FFJORD in three aspects. 
{Firstly}, FFJORD is not computationally-efficient because it relies on the the neural network to model the dynamics and the Jacobian is a fully dense matrix whose log-determinant can not be computed easily.
AUTM, similar to other integral-based flows, employs (1) a coupling or autoregressive structure so that the Jacobian is a triangular matrix and the log-density is explicitly given;
(2) decoupled entrywise transformations in each layer.
The integrand $g$ in AUTM is specified by the user and can be as simple as polynomials while still achieve competitive results.
{Secondly}, it is rigorously proved that AUTM admits universal approximation property, while it is unclear weather FFJORD admits universality.
{Thirdly}, AUTM is a monotonic flow while no monotonicity result for FFJORD can be found.
Overall, AUTM benefits from the triangular Jacobian and decoupled transformations with simple integrands and is thus much more computationally efficient than FFJORD.
It is also possible to incorporate hierarchical structures in \cite{smash,IPDPS2020} to further improve the efficiency of AUTM, which will be pursued in a later date.

\paragraph{Other models.}
Affine coupling flows \cite{nice2014,realNVP17} use an affine coupling function so that the inverse is trivial to compute.
Recent developments consist in using \emph{nonlinear} monotone functions to improve the expressiveness of affine coupling functions.
To guarantee invertibility, unlike integral-based models, many architectures e.g.  \cite{ziegler2019,neuralspline19,naf2018,bnaf20} restrict model parameters, which  limit the expressive power of the model and the training efficiency.
Moreover, computation of the inverse transformation usually requires numerical root-finding methods since a tractable analytic inverse is often \emph{not} available.
The AUTM framework circumvents those issues by using an integral representation with respect to a \emph{latent} variable.
The inverse is explicit, regardless of the choice of model classes or parameters.
This enables the rigorous justification of the universality of AUTM flows.

\section{Experiment}
In this section, we present experiments to evaluate our model. In Section \ref{section_density_estimation}, we perform density estimation on five tabular datasets and compare with other methods. In Section \ref{section_image_dataset}, we train our model on the CIFAR10 and ImageNet32 datasets for image generation. Experiments are conducted on either Nvidia 3080 GPU or Nvidia V100 GPU. All experimental details are provided in  Appendix \ref{sec:ExpAppendix}. 

For image datasets, we model $g(v,t)$ in \eqref{eq:AUTM} as a quadratic polynomial in $v$. 
For density estimation, we consider three different choices of $g(v,t)$ in \eqref{eq:AUTM}: $g(v,t)=av+b+cv^2$, $g(v,t)=av+b+cv^3$ and $g(v,t)=av+b+c\sigma(v)$, where $\sigma$ denotes the sigmoid function ($g$ is chosen to be explicit only in $v$ due to \textit{Remark 1}).
Note that this is different from many existing methods that rely on deep neural networks to model the core function in the model, such as UMNN \cite{UMNN} for the positive integrand, NAF \cite{naf2018} and BNAF \cite{bnaf20} for the autoregressive mapping, neural ODEs \cite{NeuralODE18} and FFJORD \cite{FFJORD} for the entire dynamical system.
We show in the following that, compared to the state-of-the-art models, our proposed AUTM model achieves excellent performance with simple choices of $g$. 
More importantly, for high-dimensional image datasets like ImageNet32, AUTM model requires significantly less model parameters compared to other models.

\subsection{Density Estimation}\label{section_density_estimation}

\textbf{Data sets and baselines.} We first evaluate our method on four datasets from the UCI machine-learning repository \cite{UCIdata}: POWER, GAS. HEPMASS, MINIBOONE, and also the BSDS300 dataset, which are all preprocessed by \cite{MAF17}. We compare our method to several existing normalizing flow models, including Real NVP \cite{realNVP17}, Glow \cite{GLOW18}, RQ-NSF \cite{neuralspline19}), CP-FLOW \cite{cpflow21}, FFJORD \cite{FFJORD}, UMNN \cite{UMNN} and autoregressive models such as MAF  \cite{MAF17}, MADE \cite{made15} and BNAF \cite{bnaf20}. 

\textbf{Model configuration and training.} We use 10 (or 5) masked AUTM layers and set the hidden dimensions 40 times (or 10 times) the dimension of the input. We apply a random permutation of the elements of each output vector, as the masked linear coupling layer, so that a different set of elements is considered at each layer, which is a widely used technique \cite{bnaf20}, \cite{realNVP17}, \cite{MAF17}. We use Adam as the optimizer and select hyperparameters after an extensive grid search. 

\textbf{Results.} We report average negative log-likelihood estimates on the test sets in Table \ref{experiment_5data}. It can be observed that AUTM consistently outperforms Real NVP, Glow, MADE, MAF, CP-Flow. On MINIBONDE dataset, our models perform better than all other models except BNAF. On POWER, HEPMASS, BSDS300 dataset, one of the AUTM models performs best among all baseline models. On GAS dataset, our results are competitive at either top 2 or top 3 spot with a tiny gap from the best.

\begin{table*}[ht]
  \small
  \centering
  \caption{Average test negative log-likelihood (in nats) of tabular datasets (lower is better). Numbers in the parenthesis are standard deviations.
  Average/standard deviation is computed by 3 runs. 
  The best performance for each dataset is highlighted in boldface.}
  \label{experiment_5data}
  \vskip.05in
      \renewcommand{\arraystretch}{1.3}
  \begin{tabular}{c|ccccc}
    \toprule
    Model & POWER &  GAS &  HEPMASS & MINIBOONE & BSDS300\\
    \midrule
    Real NVP\cite{realNVP17}   & -0.17(0.01) & -8.33(0.14) & 18.71(0.02) & 13.55(0.49) & -153.28(1.78)\\
    Glow\cite{GLOW18}          & -0.17(0.01) & -8.15(0.40) & 18.92(0.08) &  11.35(0.07) & -155.07(0.03)\\
    FFJORD\cite{FFJORD}        & -0.46(0.01) & -8.59(0.12) & 14.92(0.08) & 10.43(0.04) & -157.40(0.19)\\
    UMNN\cite{UMNN}            & -0.63(0.01) & -10.89(0.70) & \textbf{13.99(0.21)} & 9.67(0.13) & \textbf{-157.98(0.01)}\\
    MADE\cite{made15}       & 3.08(0.03)  & -3.56(0.04) & 20.98(0.02) & 15.59(0.50) &  -148.85(0.28)\\
    MAF\cite{MAF17}        & -0.24(0.01) & -10.08(0.02) & 17.70(0.02) & 11.75(0.44) & -155.69(0.28)\\
    CP-Flow\cite{cpflow21}    & -0.52(0.01) & -10.36(0.03) & 16.93(0.08) & 10.58(0.07) & -154.99(0.08)\\
    BNAF\cite{bnaf20}      & -0.61(0.01) & -12.06(0.09) & 14.71(0.38) & \textbf{8.95(0.07)} & -157.36(0.03)\\
    RQ-NSF (C)\cite{neuralspline19} & \textbf{-0.64(0.01)}  & \textbf{-13.09(0.02)} & 14.75(0.03) & 9.67(0.47) & -157.54(0.28)\\
    \midrule
    AUTM: $g(v,t)=av+b+cv^2$ & -0.63(0.03) & -12.24(0.04) & 14.62(0.30)
    & 9.16(0.18) & -157.45(0.05)\\
    AUTM:  $g(v,t)=av+b+cv^3$ & \textbf{-0.64(0.01)} & -12.37(0.06) & 14.76(0.25) & 9.33(0.10) & -157.54(0.10)\\
    AUTM: $g(v,t)=av+b+c\sigma(v)$ & -0.61(0.02) & -12.03(0.06) & 14.94(0.33) & 9.29(0.20) & -157.28(0.14)\\
    \bottomrule
  \end{tabular}
\end{table*}

\subsection{Experiment on image dataset}\label{section_image_dataset}

\textbf{Data sets and baselines.} We then evaluate our method on the CIFAR10 \cite{cifar10} and ImageNet32 \cite{imagenet32} datasets. Unlike density estimation tasks, image datasets are large-scale and high-dimensional. As a result, there are only a limited number of models available for image tasks. We calculate bits per dim and compare with other normalizing flow models including Real NVP \cite{realNVP17}, Glow \cite{GLOW18}, Flow++ \cite{flow++2019}, NSF \cite{neuralspline19}. 
 The results of bits per dim for each model are given in Table \ref{result_image_table}. We also show sampled images by using AUTM in Figure \ref{figure_image}. 

\textbf{Model configuration and training.} We use 14 AUTM coupling layers with 8 residual blocks for each layer (cf. \cite{deepresiduallearning16}) in our model. Each residual block has three convolution
layers with 128 channels. Our method is trained for 2500 epochs with batch size 64 for CIFAR10, and 50 epochs with batch size 64 for ImageNet32 dataset. 

\textbf{Results.} From Table \ref{result_image_table}, we can find our method outperforms all baselines with the exception of Flow++ \cite{flow++2019} on CIFAR10 dataset, which uses a variational dequantization technique.
In addition, Table \ref{result_image_table} shows AUTM is not sensitive to the size of the dataset when we transfer from CIFAR10 to ImageNet32. Comparing the number of parameters used for each method, we find that AUTM yields the best performance with much fewer parameters compared to other models for ImageNet32. In particular, the number of model parameters of Flow++ increases dramatically as we move from CIFAR10 to ImageNet32 while \emph{affine} models like Real NVP and Glow only have a moderate increase in the number of parameters. This is reasonable since Flow++ is the only \emph{nonlinear} model other than AUTM. It demonstrates that AUTM, as a \emph{nonlinear} model, yields better efficiency and robustness in terms of parameter use.

\begin{table}[ht]
  \small
  \centering
  \caption{Results of BPD (bits per dim) on CIFAR10 and ImageNet32 datasets. Results in brackets indicate the model using variational dequantization.}
  \label{result_image_table}
     \vskip.05in
     \setlength\tabcolsep{1.5pt}
  \begin{tabular}{c|cccc}
    \toprule
     & CIFAR10 & CIFAR10 &  ImageNet32 & ImageNet32\\
     Model & BPD & parameters & BPD & parameters\\
    \midrule
    Real NVP   & 3.49 & 44.0M & 4.28 & 66.1M \\
    Glow       & 3.35 & 44.7M &4.09 & 67.1M \\
    Flow++ & (3.08) & 31.4M & (3.86) & 169.0M\\
    RQ-NSF (C) & 3.38 & 11.8M & - & -\\
    \midrule
    Our Method & 3.29 & 35.5M & 3.80 & 35.5M\\
    \bottomrule
  \end{tabular}
\end{table}

\begin{figure*}[ht]
  \centering
  \includegraphics[width=.45\linewidth]{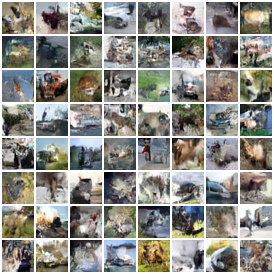}
  \hspace{1cm}
  \includegraphics[width=.45\linewidth]{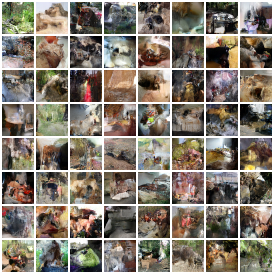}
  \caption{\textbf{Left}: Samples generated by using a pretrained model on CIFAR10 dataset. \textbf{Right}: Samples generated by using a pretrained model on ImageNet32 dataset.}
  \label{figure_image}
\end{figure*}

\subsection{Numerical inversion of AUTM}
Existing normalizing flow models with no explicit inverse usually employ bisection to compute the inverse transformation.
For AUTM, more options are available to compute the inverse mapping, such as fixed point iteration, which offers faster convergence than bisection.
We compare the performance of bisection and fixed point iteration for AUTM by considering a toy example where function $g$ in \eqref{eq:AUTMinv} is chosen as a specific quadratic polynomial in $v$ and the input variable $x$ is randomly chosen from the unit interval.
We use the discretized version (five-point trapezoidal rule) of the inverse formula in \eqref{eq:AUTMinv} as the initial guess for fixed point iteration.
Table \ref{result_iteration_table} shows that this leads to significantly fewer (around 50\%) iteration steps than bisection to achieve the same solution accuracy.





\begin{table}[ht]
  \small
  \centering
  \caption{Number of steps (averaged over 1000 random input) of root-finding method to reach a certain error tolerance}
  \label{result_iteration_table}
     \vskip.05in
  \begin{tabular}{c|cccc}
    \toprule
     Error tolerance & 1e-3 & 1e-4 &  1e-5 & 1e-6\\
    \midrule
    Iteration Method   & 4.565 & 6.642 & 8.658 & 10.831 \\
    Binary Search       & 8.967 & 12.398 & 15.668 & 19.073 \\
    \bottomrule
  \end{tabular}
\end{table}

\subsection{Comparison of FFJORD and AUTM}
Next, we test the runtime of FFJORD and AUTM on four datasets from the UCI machine-learning repository POWER, GAS. HEPMASS, MINIBOONE, all preprocessed by \cite{MAF17}. 
For AUTM, we choose $g(v,t)=av+b+cv^3$. We define the target negative log-likelihood (target NLL) as the NLL achieved by FFJORD after training for 12 hours. 
The time for each method to reach the target NLL is reported in Table \ref{result_comparison_FFJORD}.
It demonstrates that AUTM is significantly more efficient than FFJORD.
This is attributed to the structural advantages of AUTM. Firstly, AUTM transforms the input vector $x\in\mathbb{R}^D$ in an entrywise fashion where the $i$th entry is a univariate function of $x_i$. In FFJORD, the transformation of $x$ is characterized by a neural network where each output dimension is a nonlinear multivariate function of $x=(x_1,\dots,x_D)$.
Secondly, due to the aforementioned structural differences, AUTM has a triangular Jacobian while FFJORD has a dense Jacobian that induces difficulty in computing the log-determinant accurately.
Thirdly, the integrand $g$ in AUTM can be chosen as a simple function, for example, a quadratic function in $v$. In FFJORD, the integrand needs to be a neural network with $D$ input variables $x_1,\dots,x_D$. To evaluate the integral of such a complicated integrand accurately, a large number of quadrature nodes are needed, which will increase the  cost in both forward and backward transformations.
Additionally, AUTM enables the use of user-defined integrand $g$, which will be beneficial if prior information of the transformation to be learned is available.


\begin{table}[ht]
  \small
  \centering
  \caption{Runtime for FFJORD and AUTM to reach the target negative log-likelihood for each dataset.}
  \label{result_comparison_FFJORD}
     \vskip.05in
  \begin{tabular}{c|c|cc}
    \toprule
     Dataset & Target NLL & FFJORD & AUTM\\
    \midrule
     POWER     & 0.23  & 12hr & 6.92min\\
     GAS       &-5.24  & 12hr & 3.67min\\
     HEPMASS   & 21.85 & 12hr & 7.40min\\
     MINIBOONE & 11.29 & 12hr & 1.75min\\
    \bottomrule
  \end{tabular}
\end{table}

\color{black}

\section{Summary}
We have introduced a new nonlinear monotonic triangular flow called AUTM. 
AUTM leverages the explicit inverse formula used in FFJORD and the triangular Jacobian in coupling and autoregressive flows.
Compared to FFJORD, AUTM demonstrates much better computational efficiency thanks to the triangular Jacobian structure, decoupled input dimensions, simple representation of the scalar integrand.
Compared to other monotonic flows, AUTM has unrestricted parameters and more convenient computation of the inverse transformation.
Theoretically, we have proved that AUTM is a universal approximator for any monotonic normalizing flow.
The performance is demonstrated by comparison to the state-of-the-art models in density estimation and image generation. As a nonlinear monotonic flow, AUTM is able to achieve competitive performance on high-dimensional image datasets.

%


\bibliography{autm}
\bibliographystyle{plain}

\clearpage
\appendix

\section{Proofs}
\label{app:proof}

\begin{proof}[Proof of Theorem \ref{thm:derivative}]
    Write $v=v(t,x)$ as a function of $t$ and $x$. In fact,
    \[
        v(t,x) = x+\int_0^t g(v(t,x),t) dt.
    \]
    Define $u(t)=\frac{\partial v}{\partial x}$. It follows from the formula of $v$ that
    \begin{equation}
    \label{eq:ut}
        u(t) = \frac{\partial v}{\partial x} = 1+\int_0^t \frac{\partial g}{\partial v}\frac{\partial v}{\partial x} dt.
    \end{equation}
    Then we see that 
    \[
        \frac{du}{dt} = \frac{\partial g}{\partial v}\frac{\partial v}{\partial x} = \frac{\partial g}{\partial v} u.
    \]
    This implies that 
    \[
        u(t) = C\exp\left(\int_0^t \frac{\partial g}{\partial v} dt\right).
    \]
    Recall from \eqref{eq:ut} that $u(0)=1$, so $C=1$.
    Since $q(x)=v(1,x)$, we now conclude that 
    \[
        q'(x) = \frac{\partial v}{\partial x}(1,x) = u(1) = \exp\left(\int_0^1 \frac{\partial g}{\partial v} dt\right).
    \]
\end{proof}

\begin{proof}[Proof of Theorem \ref{thm:mono}]
    Obviously, monotonicity implies invertibility, so it suffices to show that $q$ is strictly increasing.
    There are several ways to prove this.
    The simplest way is to use Theorem \ref{thm:derivative} to see that $q'(x)>0$, so $q$ must be increasing.
    Below we present a different proof without using the analytic expression of $q'(x)$.

    Let $v_x(t)$ denote the function in \eqref{eq:AUTM} with $v(0)=x$,
    where $g(v,t)$ is continuous in $t$ and uniformly Lipschitz continuous in $v$.
    We need to show that for any $x<x'$, there holds $q(x)<q(x')$.
    We prove this by contradiction.
    Assume that there exist $x<x'$ such that $q(x)\geq q(x')$.
    There are two cases to consider:
    $q(x)=q(x')$ and $q(x)>q(x')$.
    \paragraph{Case 1:} $q(x)=q(x')=C$ for some constant $C$.
    \newline
    In this case, $v_x(1)=v_{x'}(1)=C$.
    Define $w_a(t)=v_a(1-t)$ for any $a\in\mathbb{R}$ and $t\in [0,1]$.
    Then it is easy to see that $w_x(t)$, $w_{x'}(t)$ are both solutions to the ODE
    \begin{equation}
    \label{eq:dwdt}
        \frac{dw}{dt} = -g(w(t),1-t),\quad w(0)=C,\quad t\in [0,1].
    \end{equation}
    Note that $w_x(t)$ and $w_{x'}(t)$ are two different solutions of \eqref{eq:dwdt} because $w_x(1)=x<x'=w_{x'}(1)$.
    This contradicts the uniqueness of solution to the ODE (which is well-posed since $g$ is Lipschitz) and we conclude that 
    the assumption $q(x)=q(x')$ can \emph{not} hold.
    
    \paragraph{Case 2:} $q(x)>q(x')$.
    \newline
    In this case, we have $v_x(1)>v_{x'}(1)$ and $v_x(0)<v_{x'}(0)$.
    Applying intermediate value theorem to $v_x(t)-v_{x'}(t)$ yields that 
    there exists $\tau\in (0,1)$ such that 
    \[
        v_x(\tau)=v_{x'}(\tau)=C
    \]
    for some constant $C$.
    Similar to Case 1, if we define $w_a(t)=v_a(\tau-t)$ for $t\in [0,\tau]$,
    then we can deduce that the ODE
    \begin{equation}
    \label{eq:dwdt2}
        \frac{dw}{dt}=-g(w(t),1-t),\quad w(0)=C
    \end{equation}
    has two different solutions $w_x(t)$ and $w_{x'}(t)$ as 
    $w_x(\tau)=x<x'=w_{x'}(\tau)$,
    which contradicts the well-posedness of \eqref{eq:dwdt2}.
    
    We now conclude that the inequality $q(x)\geq q(x')$ can \emph{not} hold.
    Consequently, $q(x)$ must be strictly increasing and the proof is complete.
\end{proof}

\begin{proof}[Proof of Theorem \ref{thm:dense}]
Since the set of increasing Lipschitz continuous functions is dense in $\mathcal{M}$, it suffices to consider Lipschitz functions in $\mathcal{M}$.

We need to show that,
given an arbitrary increasing Lipschitz continuous function $\phi(x)$,
there exists a family of AUTM bijections $\{ q_s(x) \}_{s>0}\subset \mathcal{Q}$ that converge compactly to $\phi(x)$ as $s\to 0$,
i.e. $q_s|_K\to \phi|_K$ uniformly on any compact set $K\subset \mathbb{R}$ as $s\to 0$.
We construct $q_s$ as follows.

For $s>0$, we define 
$$g_s(v,t) = \phi(v(te^{-\frac{1}{s}})) - v(te^{-\frac{1}{s}}).$$
Then we define 
$$q_s(x) = x+\int_0^1 g_s(v_s,t) dt,$$
where 
$$v_s(t) := x+\int_0^t g_s(v_s,z) dz.$$


We prove that for any $x$,
$q_s(x)$ converges to $\phi(x)$ as $s\to 0$.
In fact, we will show that the convergence rate is $O(e^{-\frac{1}{s}})$.

First we prove that $\max\limits_{t\in [0,1]}|v_s(t)|$ is uniformly bounded for $s > 0$.
To show this, we investigate the differential equation that $v_s(t)$ satisfies.
For notational convenience, we drop the subscript $s$ in $v_s$ in the proof below.
The dependence on $s$ will be stated explicitly when needed.
Note that 
$$\dfrac{dv}{dt} = g_s(v,t) = \phi(v(te^{-\frac{1}{s}})) - v(te^{-\frac{1}{s}}),\quad v(0)=x.$$
Equivalently, by a change of variable $\tau=te^{-\frac{1}{s}}$, we have
\begin{equation}
\label{eq:dvdtau}
\dfrac{dv}{d\tau} = e^{\frac{1}{s}} [ \phi(v(\tau)) - v(\tau)],\quad v(0)=x.
\end{equation}
In the following, we first analyze the property of the solution $v$ to the initial value problem in \eqref{eq:dvdtau},
and then we prove the uniform boundedness.

\textbf{Results on initial value problem (\ref{eq:dvdtau}).}
We prove in the following that the solution $v(\tau)$ of (\ref{eq:dvdtau}) must fall into one of the three cases below:
\begin{enumerate}[(I)]
\item $v'(\tau)=0$ for all $\tau$;    
\item $v'(\tau)>0$ for all $\tau$;    
\item $v'(\tau)<0$ for all $\tau$.
\end{enumerate}

Case (I): We first show that if $v'(a)=0$ for some $a\geq 0$, then  $v'(\tau)=0$ everywhere.
In fact, $v'(a)=e^{\frac{1}{s}}[\phi(v(a)) - v(a)]=0$ implies $\phi(v(a)) - v(a)=0$.
Note that $v(\tau)=v(a)$ is then an equilibrium solution. Since $\phi$ is Lipschitz, from the uniqueness theorem of the initial value problem, $v(\tau)=v(a)$ is the only solution and thus $v'(\tau)=0$ for all $\tau$. 

Case (II): If $v'(0)>0$, then it is easy to see that $v'(\tau)>0$ for all $\tau$.
In fact, if $v'(a)=0$ for some $a>0$, then we know from the result above that $v'(0)=0$, a contradiction; if $v'(a)<0$ for some $a>0$, then because $v'(\tau)$ is continuous, intermediate value theorem implies that there must be a point $b \in (0,a)$ such that $v'(b)=0$, which then implies $v'(0)=0$ according to the result above, a contradiction.

Case (III): If $v'(0)<0$, a similar argument shows that $v'(\tau)<0$ for all $\tau$.

Thus we conclude that there can only be three cases for the solution $v(\tau)$, as shown in (I), (II), (III).

\textbf{Proof of uniform boundedness of $|v_s|$.}
Next we show that every solution $v$ of (\ref{eq:dvdtau}) is uniformly bounded in $s$.

If $v$ falls into Case (I), it is easy to see that $v(\tau)=v(0)=x$ independent of $s$, thus uniformly bounded.

If $v$ falls into Case (II), then $\phi(v)-v>0$, and the equation in (\ref{eq:dvdtau}) can be equivalently written as
$$\dfrac{dv}{\phi(v) - v} = e^{\frac{1}{s}} d\tau.$$
Let $G(x)$ denote the anti-derivative of 
$\frac{1}{\phi(x) - x}$. Then it follows that
$$G(v)=\int_0^{\tau} e^{\frac{1}{s}} d\tau + A = \tau e^{\frac{1}{s}} + A = t+A,$$
where $A$ is a constant independent of $s$.
In fact, setting $t=0$ (or equivalently, $\tau=0$) yields that  
$$G(v(0)) = G(x) = 0+A = A.$$
Thus $A=G(x)$.
Since $G'(v) = \frac{1}{\phi(v) - v}= \frac{1}{v'} \cdot e^{\frac{1}{s}}  >0$,
we know from inverse function theorem that $G^{-1}$ exists and is continuous and strictly increasing.
Therefore,
\begin{equation}
    \label{eq:vcase2}
v = G^{-1}(G(v)) = G^{-1}(t+A) = G^{-1}(t+G(x))
\end{equation}
is uniformly bounded in $s$ and $t$ since $G$ is independent of $s,t$, and $t+G(x)\in [G(x),1+G(x)]$ with $t\in [0,1]$.
Therefore, in Case (II), $|v_s(t)|$ is uniformly bounded in $s$ and $t$.

If $v$ falls into Case (III), the uniform boundedness of $|v_s(t)|$ can be derived analogously as in Case (II).



Now we conclude that $\max\limits_{t\in [0,1]}|v_s(t)|$ is uniformly bounded with respect to $s$.

\textbf{Proof of convergence $q_s(x)\to \phi(x)$.}
Next we show that the uniform boundedness of $|v_s|$ implies the convergence
$q_s(x)\to \phi(x)$ as $s\to 0$.
Since $\phi$ is Lipschitz, we see that $|g_s(v_s,t)|$ is also uniformly bounded in $s$ and $t$.
Thus $M:=\sup\limits_{s>0,t\in [0,1]}|g_s(v_s,t)| < \infty$.
Note that
$$\phi(x) = x + \int_0^1 \phi(x) -x dt.$$
Let $L$ denote the Lipschitz constant of $\phi$. Then we deduce from the definition of $q_s$ and $v_s$ that 
\begin{equation*}
\begin{aligned}
|q_s(x)-\phi(x)|
&=\left| \int_0^1 \phi(v_s(te^{-\frac{1}{s}}))-\phi(x) + x-v_s(te^{-\frac{1}{s}}) dt \right|\\
&\leq (L+1)\int_0^1 |v_s(te^{-\frac{1}{s}})-x| dt \\
&= (L+1)\int_0^1 \left|\int_0^{te^{-\frac{1}{s}}} g_s(v_s,z) dz\right| dt\\
&\leq (L+1) \int_0^1 Mte^{-\frac{1}{s}} dt\\
&= (L+1)\frac{M}{2} e^{-\frac{1}{s}} \to 0,\quad s\to 0.
\end{aligned}
\end{equation*}
This proves the pointwise convergence $q_s(x)\to\phi(x)$ as $s\to 0$.
Since $K$ is compact and $|v_s|$ is continuous in $x$ (see \eqref{eq:vcase2} for example), it can be deduced from the argument above that $|v_s|$ is in fact uniformly bounded in $s>0$, $t\in [0,1]$ and $x\in K$.
Thus the convergence proof still holds with constant $M$ chosen as an upper bound of $|g_s|$ over $s>0$, $t\in [0,1]$ and $x\in K$.
The rate of convergence is still $O(e^{-\frac{1}{s}})$ as $s\to 0$.

The proof of Theorem 3 is now complete.

\end{proof}

\begin{proof}[Proof of Theorem \ref{thm:convergence}]
This is an immediate result of Theorem \ref{thm:dense}.
Since compact convergence implies pointwise convergence, it suffices to show the compact convergence.
According to the proof of Theorem \ref{thm:dense},
for each $F_k$ (where each entry is a monotone continuous function), we can construct a family of triangular AUTM transformations $T_{s,k}$ (parametrized by $s>0$) that converge compactly to $F_k$ with a rate of $O(e^{-\frac{1}{s}})$ as $s\to 0$.
Then it follows immediately that $T_s:=T_{s,1}\circ T_{s,2}\circ \cdots\circ T_{s,p}$ converges compactly to $F=F_1\circ F_2\circ\cdots\circ F_p$ with rate $O(e^{-\frac{1}{s}})$ as $s\to 0$,
which completes the proof.
\end{proof}

\begin{proof}[Proof of Theorem \ref{thm:family}]
The proof follows essentially the same argument as the proof of Theorem \ref{thm:dense} in which $\kappa_s(t)=1$.
More precisely, for a general positive kernel $\kappa_s(t)$ satisfying \eqref{eq:kernelConditions}, 
the positivity of $\kappa_s(t)$ is used in obtaining results for the initial value problem;
the bounded $L^1$ norm of $\kappa_s(t)$ is used in proving the uniform boundedness of $|v_s|$;
the asymptotic property as $s\to 0$ is used in proving the convergence $q_s\to \phi$.
Therefore, similar to the proof of Theorem \ref{thm:dense}, we conclude that $q_s|_K\to \phi|_K$ uniformly as $s\to 0$.
We remark that the class of kernels in \eqref{eq:kernelConditions} includes $\kappa_s(t)=1$, the normalized Gaussian kernel $\kappa_s(t)=C_s e^{-\frac{t^2}{s}}$ with $C_s=\left(\int_0^{1} e^{-\frac{z^2}{s}}dz\right)^{-1}$, and it can be computed that the convergence rate for the latter is $O(s^{-\frac{1}{2}}e^{-\frac{1}{s}})$.
\end{proof}

\section{Experiment details}
\label{sec:ExpAppendix}

\subsection{Hyperparameters for density estimation datasets}\label{appendix_experiment_details_data}

We list the hyperparameters in Table \ref{appendix_hyperparamter_5dataset}. Hyperparameters are obtained after extensive grid search. For the number of layers, we tried 5,10,20. For the hidden layer dimensions, we tried $10d,20d,40d$, where $d$ is the dimension of the vector in the dataset.  We trained our model by using Adam. We stop the training process when there is no improvement on validation set in several epochs.

\begin{table*}[ht]
  \small
  \centering
  \caption{Hyperparameters for Power, GAS, Hepmass, Miniboone, BSDS300 datasets, $d$ is the dimension of the vector in the dataset}
  \label{appendix_hyperparamter_5dataset}
  
  \vskip.05in
  \begin{tabular}{c|ccccc}
    \toprule
     & POWER & GAS & Hepmass & Miniboone & BSDS300\\
    \midrule
    layers & 10 &10 &10 &5 &10 \\
    hidden layer dimensions & 40d & 40d & 40d & 10d & 40d\\
    epochs & 450 & 1000 & 500 & 1000 & 1000\\
    batch size & 256 & 256 &256 & 256 & 128 \\
    optimizer & adam & adam& adam& adam& adam \\
    learning rate & 0.01 & 0.01& 0.01& 0.01& 0.01 \\
    lr decay rate & 0.5 & 0.5 & 0.5 & 0.5 & 0.5\\
    \bottomrule
  \end{tabular}
\end{table*}

\subsection{Hyperparameters for CIFAR10 and ImageNet32}\label{appendix_experiment_details}

We list the hyperparameters in Table \ref{appendix_hyperparamter_image}. In this experiment, most hyperparameters come from \cite{deepresiduallearning16}. We use 14 AUTM coupling layers with 8 residual blocks for each layer in our model. Like \cite{GLOW18}, before each coupling layers, there is an actnorm layer and a conv $1 \times 1$ layer. Each residual block has three convolution layers with 128 channels. Our method is trained for 100 epochs with batch size 64. We trained our model by using Adamax with Polyak.

\begin{table*}[ht]
  \small
  \centering
  \caption{Hyperparameters for CIFAR-10 and ImageNet32 datasets}
  \label{appendix_hyperparamter_image}
  
  \vskip.05in
  \begin{tabular}{c|cc}
    \toprule
     & CIFAR10 & ImageNet32\\
    \midrule
    layers & 14 &14 \\
    residential blocks & 8& 8 \\
    hidden channels & 128 &128\\
    epochs & 2500 & 50\\
    batch size & 64 &64 \\
    optimizer & adamax & adamax \\
    learning rate & 0.01 & 0.01 \\
    lr decay & 0.5 & 0.5\\
    lr decay epoch & [30,60,90] & [30,60,90] \\
    \bottomrule
  \end{tabular}
\end{table*}

\section{Reconstruction on Image dataset}

We examine the reserve step of our AUTM layer by showing the reconstruction of images. The used model is the same as the model in Section \ref{section_image_dataset} and use CIFAR10 and ImageNet32 dataset in this experiment. We compute the inverse of our layer by using iterative method with the reverse of integral as the initial guess. As Figure \ref{figure_reconstruction} shows, the average L1 reconstruction error converges in 15 steps. Also, Figure \ref{figure_reconstruction} shows that the reconstructed images look the same as original images.

We show the result of the reconstruction process of our method in Figure \ref{figure_reconstruction}.

\begin{figure*}[ht]  \centering  \includegraphics[width=.35\linewidth]{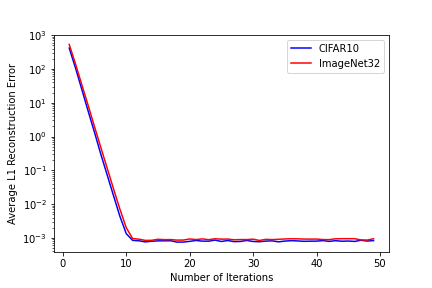}  \hspace{1cm}  
\includegraphics[width=.35\linewidth]{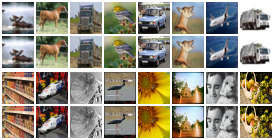}  
\caption{\textbf{Left}: The average value of the L1 of reconstruction error for 64 images. \textbf{Right}: The reconstruction of selected images in CIFAR10 and ImageNet32 dataset. The 1st, 3rd rows are the original images, and the 2nd, 4th rows are the reconstructions.
}  
\label{figure_reconstruction}
\end{figure*}

\section{Code}
Our code is available at {\url{https://anonymous.4open.science/r/AUTM-2B1B}.} We use some code from BNAF\cite{bnaf20}.

\end{document}